\documentclass[conference]{IEEEtran}
\usepackage{amsmath,amsfonts,amssymb,amsthm}
\usepackage{times}
\usepackage{comment}
\usepackage{xcolor}
\usepackage{graphicx}
\usepackage{subfigure}
\usepackage{multirow}
\usepackage[numbers]{natbib}
\usepackage{multicol}
\usepackage[bookmarks=true]{hyperref}
\usepackage{makecell} 

\usepackage{floatrow}
\usepackage[linesnumbered,ruled]{algorithm2e}
\usepackage{txfonts}
\usepackage{mathtools}

\newcommand{\bA}{\boldsymbol{A}}
\newcommand{\bF}{\boldsymbol{F}}
\newcommand{\br}{\boldsymbol{r}}
\newcommand{\bw}{\boldsymbol{w}}
\newcommand{\bxi}{\boldsymbol{\xi}}
\newtheorem{thm}{Theorem}[]
\newtheorem{problem}[thm]{Problem}
\newtheorem{lemma}[thm]{Lemma}

\setlength{\columnsep}{4mm}
\begin{document}

\title{Gait design for limbless obstacle aided locomotion using geometric mechanics}


\author{\authorblockN{Baxi Chong\authorrefmark{1},
Tianyu Wang\authorrefmark{2}, 
Daniel Irvine\authorrefmark{3}, 
Velin Kojouharov\authorrefmark{2}, 
Bo Lin\authorrefmark{3}, \\
Howie Choset\authorrefmark{4}, 
Daniel I. Goldman\authorrefmark{1}, 
and Grigoriy Blekherman\authorrefmark{3}}
\authorblockA{\authorrefmark{1}School of Physics \authorrefmark{2}School of Mechanical Engineering \authorrefmark{3}School of Mathematics, Georgia Institute of Technology}
\authorblockA{\authorrefmark{4}Robotics Institute, Carnegie Mellon University}}

\maketitle

\begin{abstract}
Limbless robots have the potential to maneuver through cluttered environments that conventional robots cannot traverse. As illustrated in their biological counterparts such as snakes and nematodes, limbless locomotors can benefit from interactions with obstacles, yet such obstacle-aided locomotion (OAL) requires properly coordinated high-level self-deformation patterns (gait templates) as well as low-level body adaptation to environments. Most prior work on OAL utilized stereotyped traveling-wave gait templates and relied on local body deformations (e.g., passive body mechanics or decentralized controller parameter adaptation based on force feedback) for obstacle navigation, while gait template design for OAL remains less studied. In this paper, we explore novel gait templates for OAL based on tools derived from geometric mechanics (GM), which thus far has been limited to homogeneous environments. Here, we expand the scope of GM to obstacle-rich environments. Specifically, we establish a model that maps the presence of an obstacle to directional constraints in optimization. In doing so, we identify novel gait templates suitable for sparsely and densely distributed obstacle-rich environments respectively. Open-loop robophysical experiments verify the effectiveness of our identified OAL gaits in obstacle-rich environments. We posit that when such OAL gait templates are augmented with appropriate sensing and feedback controls, limbless locomotors will gain robust function in obstacle rich environments.
\end{abstract}

\IEEEpeerreviewmaketitle
\section{Introduction}

Elongate limbless locomotors have advantages in navigating cluttered and confined spaces. For instance, adaptation to cluttered environments is believed to be a source of evolutionary pressure for limblessness in Squamates (lizards and snakes)~\cite{rieppel1988review,simoes2015visual}. In order to move through such cluttered environments, these animals had evolved the capability to push off their surroundings to locomote. This is commonly known as obstacle-aided locomotion (OAL)~\cite{liljeback2011experimental,majmudar2012experiments,kelley1997effects}. Moreover, many biological limbless locomotors can have higher speeds with OAL than in obstacle-free environments~\cite{majmudar2012experiments,kelley1997effects} while legged locomotors often slow down as heterogeneity increases~\cite{sponberg2008neuromechanical,collins2013rock,parker2016effects,gast2019preferred}. Unfortunately, it is still challenging for elongated limbless robots to approach the performance of their biological counterparts displayed in OAL. 

To replicate the successful biological OAL in robotic/artificial counterparts, prior work has considered OAL in robotic applications. \citet{transeth2008snake} built physical models of robot-obstacle interactions, and made quantitative predictions of robot locomotion in obstacle-rich environments. \citet{liljeback2011experimental} noted that the interactions between robots and obstacles are only useful when the force from the obstacle to robots aligns with the desired direction of motion. Specifically the ``beneficial obstacle" and ``detrimental obstacle" are distinguished based on their configuration relative to the robot. Recently, compliant control (shape-based compliance) was also introduced to improve the performance of limbless robots among obstacles~\cite{travers2018shape,wang2020directional,wang2022omega}. 

\begin{figure}[t]
\centering
\includegraphics[width=1\linewidth]{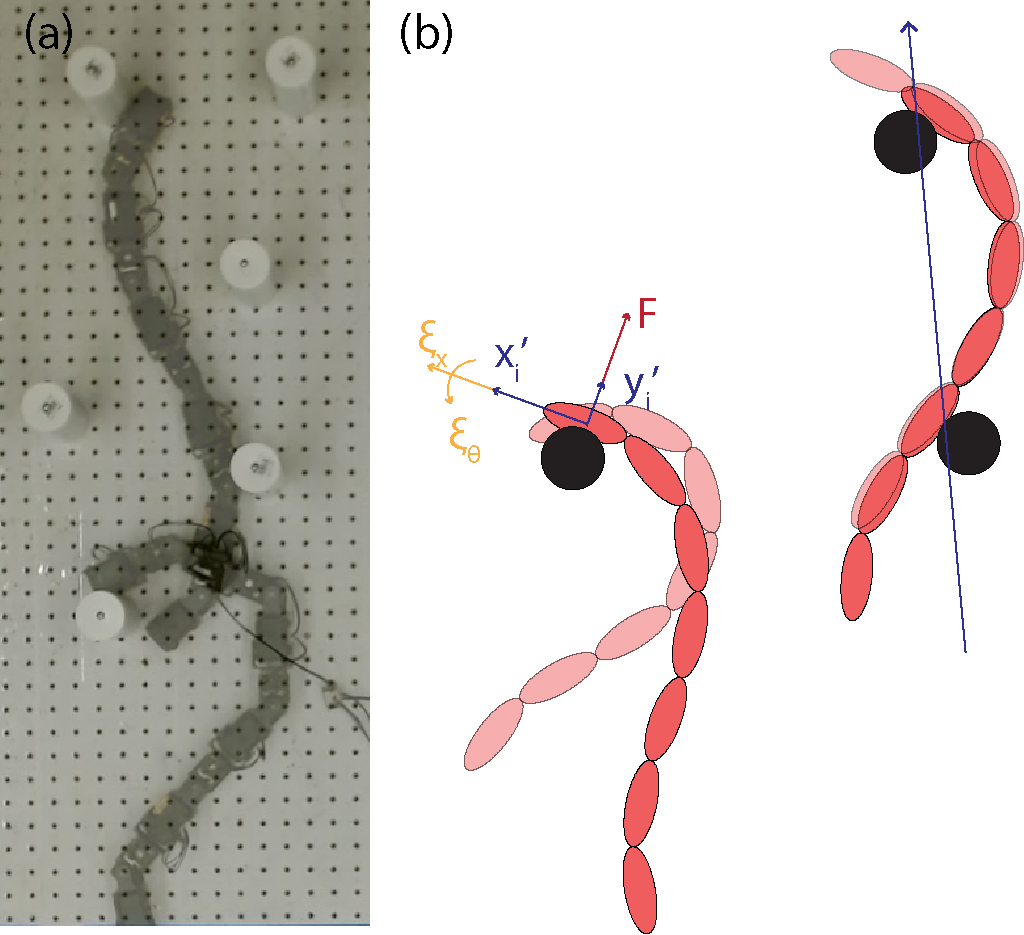}
\caption{\textbf{A robophysical and theoretical model of obstacle aided locomotion} (a) Top view of the robophysical model navigating among multiple obstacles. (b) The theoretical model for obstacle aided locomotion with (\textit{left}) a single obstacle and (\textit{right}) multiple obstacles.
}
\label{fig:intro}
\end{figure}


As suggested in prior work, gait template design\footnote{A periodic internal shape change that causes a net displacement in the world reference frame.} is crucial to the performance of limbless locomotors~\cite{gong2016simplifying,marvi2014sidewinding,chong2021frequency}. Appropriate gait templates can greatly simplify the control/adaptation of robots especially in heterogeneous environments~\cite{full1999templates,katz2019mini,ozkan2020systematic,astley2015modulation,chong2021frequency}. Most limbless robots use traveling-wave gait templates for locomotion where sinusoidal oscillation of body joint bending propagates from head to tail under constant amplitude (i.e., phase modulation)~\cite{hirose1993biologically,crespi2005swimming,schiebel2019mechanical}. To the best of our knowledge, most OAL work has focused on force-feedback decentralized adaptation of traveling-wave gait templates to interact with obstacles, where the choices of gait templates are often pre-determined~\cite{liljeback2011experimental,travers2018shape,wang2020directional}. In other words, there is a lack of gait templates designed (other than traveling-wave) specifically for obstacle aided locomotion. 

Geometric mechanics (GM) is a framework for gait design. GM was developed to study swimming in obstacle-free low Reynolds number fluids~\cite{purcell1977life,wilczek1989geometric}. Recent work has shown that GM can also offer insights in gait design in terrestrial contexts (e.g., granular media and frictional ground) where frictional forces dominate over inertial forces~\cite{chong2021coordination,chong2021frequency}. In GM the motion of a locomotion system is separated into a shape space (the internal joint angle space) and a position space (position and orientation of locomotor in the world frame). 
By establishing the mapping between velocities in shape and position spaces, GM offers tools that allow us to visually analyze, design and optimize gaits~\cite{hatton2013geometric}. Although GM has produced a number of highly effective gait templates, prior work in GM has been limited to obstacle-free environments.

\begin{figure}[t]
\centering
\includegraphics[width=1\linewidth]{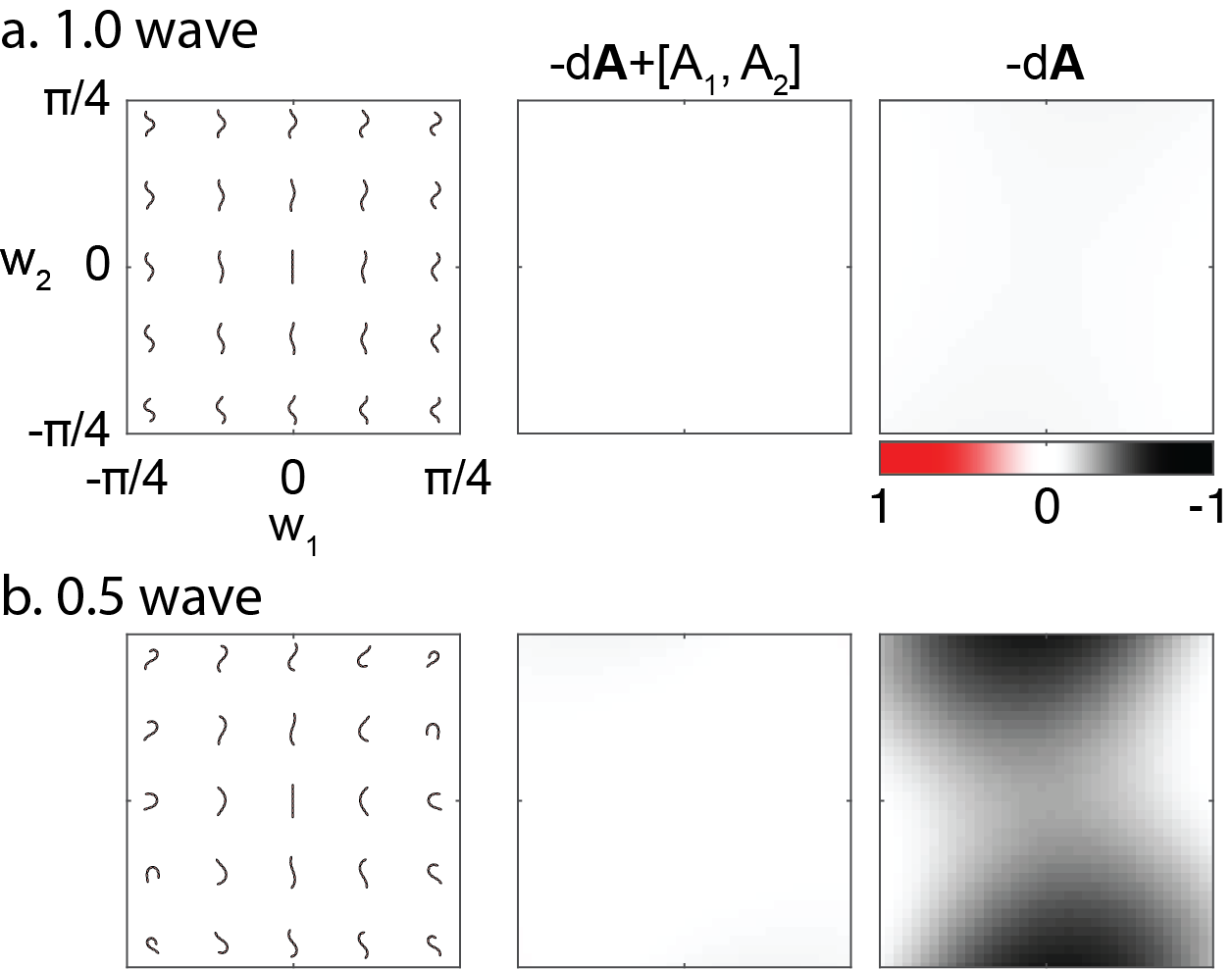}
\caption{\textbf{Forward velocity integral and Lie bracket effect} From left to right: the shape space, the height function ($-\textbf{d}\bA+\boldsymbol{[A_1,\ A_2]}$), and the forward velocity integral ($-\textbf{d}\bA$). We compared two wave numbers: (a) $f_s=1$ and (b) $f_s=0.5$. In both cases, the height function has zero values over the shape space. Notably, for $f_s=1$, neither forward velocity integral nor Lie bracket effect has significant contributions to forward displacement. In contrast, when $f_s=0.5$, the forward velocity integral and Lie bracket effect have non-negligible, opposite contribute to forward displacement. The color bar scale axes labeling are identical in all panels.}
\label{fig:NumOfWaves}
\end{figure}

In this paper, we seek to expand the scope of GM to obstacle-rich\footnote{Here, we consider obstacles as vertical posts randomly distributed on flat terrains.} environments. Challenges of extending GM to design gait templates in heterogeneous environments include but are not limited to (1) modeling the interaction between obstacle and robot, (2) mapping the presence of obstacle from position space to constraints in shape space, and (3) identifying whether the obstacle (at a given position relative to the robot) is beneficial. We establish a new physical robot-obstacle interaction model integrating the presence of an obstacle into the GM framework. In doing so, we then convert the gait design problem into a discrete optimization problem in graphs. As a result, we identify elliptical gait templates which combine both amplitude modulation and phase modulation, specialized for navigating sparsely-distributed obstacle-rich environments. Further, we confirm that traveling-wave gait templates are specialized for densely-distributed obstacle-rich environments, which is consistent with prior works~\cite{liljeback2011experimental,travers2018shape}. We verify our results using a robophysical model (Fig. \ref{fig:intro}). 

\section{Geometric mechanics}

In this subsection, we provide an overview of the geometric tools that undergird the analysis framework introduced in this paper.
For a more detailed and comprehensive review, we refer readers to~\cite{chong2019hierarchical,toruspaper,Marsden,zhong2018coordination}.

\subsection{Kinematic Reconstruction Equation}

In systems where inertial effects are negligible, the equations of motion (\cite{Marsden}) can be approximated as:

\begin{equation}\label{eq:EquationOfMotion1}
    \bxi={\bA_r(\br)\dot \br},
\end{equation}

where $\bxi=[\xi_x, \xi_y, \xi_\theta]$ denotes the body velocity in the forward, lateral, and rotational directions; ${\br}$ denotes the internal shape variables (joint angles); $\bA_r(\br)$ is the local connection matrix, which encodes environmental constraints and the conservation of momentum. The analysis and visualization power of geometric mechanics is particularly effective when the shape variable is 2-dimensional, i.e., $\br\in \mathbb{R}^2$. In the applications where there are more than 2 joints (e.g. $N$ degrees-of-freedom), we use two shape basis functions~\cite{gong2016simplifying} to reduce the dimensionality of the system:
\begin{equation}
     \br = \left[\boldsymbol{\beta}_{1},\ \boldsymbol{\beta}_{2}\right]\bw, \ \  \bxi={\bA_r\Big(\br(\bw)\Big)\dot \bw}  = \bA(\bw)\dot \bw\label{eq:EquationOfMotion2}
\end{equation}

\noindent where $\boldsymbol{\beta}_{1},\ \boldsymbol{\beta}_{2}\in\mathbb{R}^{N}$ are shape basis functions, $\bw\in\mathbb{R}^{2}$ is the reduced shape variable, and $\bA$ is the local connection matrix expressed with respect to reduced shape variables. In applications to limbless robots with $N$ joints, the shape basis functions are often chosen to be:

\begin{equation}\label{eq:shapebasis}
    \boldsymbol{\beta}_{1}(i) = \sin\left(2\pi f_s\frac{i}{N-1}\right),\ \ \boldsymbol{\beta}_{2}(i) = \cos\left(2\pi f_s\frac{i}{N-1}\right)
\end{equation}
    
\noindent where $f_s$ is the number of spatial waves, $i$ denotes the joint index.


\subsection{Numerical Derivation of the Local Connection Matrix}

The local connection matrix $\bA$ can be numerically derived using resistive force theory (RFT) to model the ground reaction force~\cite{li2013terradynamics,sharpe2015locomotor,zhang2014effectiveness}. Specifically, the ground reaction force (GRF) experienced by the locomotor is the sum of the GRF experienced by each body segment. RFT decomposes the GRF experienced by a body segment of a locomotor into two components: $\bF_{\parallel}$ and $\bF_{\perp}$, reaction force along the direction parallel and perpendicular to the body segment respectively. 

From geometry and physics of GRF, reaction forces of each segment can be calculated from the body velocity $\bxi$, reduced body shape $\bw$, and reduced shape velocity $\dot{\bw}$~\cite{rieser2019geometric,murray2017mathematical}.
Assuming quasi-static motion, we consider the total net force applied to the system is zero at any instant in time:

\begin{equation}\label{eq:forceIntegral}
    \bF=\sum_{i=1}^{N} {\left[\bF^{i}_{\parallel}\left(\bxi,\bw,\dot{\bw}\right)+\bF^{i}_{\perp}\left(\bxi,\bw,\dot{\bw}\right)\right]}=0.
\end{equation}

At a given body shape $\bw$, Eq.(\ref{eq:forceIntegral}) connects the shape velocity $\dot{\bw}$ to the body velocity $\bxi$. Therefore, by the implicit function theorem and the linearization process, we can numerically derive the local connection matrix $\bA(\bw)$.
In our implementation, we compute the solution of Eq.(\ref{eq:forceIntegral}) using the MATLAB function \textit{fsolve}.

\subsection{Connection Vector Fields and Height Functions}

Each row of the local connection matrix $\bA$ corresponds to a component direction of the body velocity. Each row of the local connection matrix, over the shape space, then forms a connection vector field.
In this way, the body velocities in the forward, lateral, and rotational directions are computed as the dot product of connection vector fields and the reduced shape velocity~$\dot{\bw}$. The displacement along the gait path $\partial \phi$ can be obtained by integrating the ordinary differential equation below~\cite{hatton2015nonconservativity}:
\begin{equation}\label{eq:fullapprox}
    g(T) = \int_{\partial \phi} {T_eL_{g(\bw)} \bA(\bw) \mathrm{d}{\bw}},
\end{equation}

\noindent where $g(\bw)= [x(\bw), y(\bw), \alpha(\bw)]$ represents the position and rotation of the body frame viewed in the world frame at position $\bw$; $T$ is the time period of a gait cycle; and $T_eL_{g}$ is the differential at the identity of the left multiplication map $L_g\colon SE(2) \to SE(2)$, i.e.

\begin{equation*}
    T_eL_{g}=\begin{bmatrix} 
        \cos(\alpha) & -\sin(\alpha) & 0  \\
        \sin(\alpha) & \cos(\alpha) & 0  \\
        0 & 0 & 1
    \end{bmatrix}.
\end{equation*}
\noindent The group element $g = (x,y,\theta)\in \mathrm{SE}(2)$ represents the position and rotation of the center of mass of the robot. Hence $g(T)=[\Delta x,\Delta y, \Delta \theta]$ computes the translation and rotation of the body frame (w.r.t. the world frame) in one gait cycle.
The forward velocity integral can therefore provide a first-order approximation to Eq.~\ref{eq:fullapprox}:

\begin{equation}
    \begin{pmatrix} 
        \Delta x \\
        \Delta y \\
        \Delta \theta 
    \end{pmatrix}
    \approx \int_{\partial \phi} {\bA(\bw)\mathrm{d}\bw} =  \int_{\partial \phi} {\begin{bmatrix}
        \bA^x(\bw) \\ \bA^y(\bw) \\ \bA^\theta (\bw)
    \end{bmatrix}\mathrm{d}\bw},
    \label{eq:lineint}
\end{equation}

\noindent where $\bA^x, \bA^y, \bA^\theta$ are the three rows of the local connections respectively. According to Stokes' Theorem, the line integral along a closed curve $\partial \phi$ is equal to the surface integral of the curl of $\bA(\bw)$ over the surface enclosed by $\partial \phi$:
\begin{equation}\label{eq:stokes}
    \int_{\partial \phi} {\bA(\bw)\mathrm{d}\bw}=\iint_{\phi} {-\textbf{d}\bA(\bw)\mathrm{d}w_1\mathrm{d}w_2},
\end{equation}
where $\phi$ denotes the surface enclosed by $\partial \phi$, $-\textbf{d}\bA(\bw)$ denotes the curl of the connection vector field.

Note that in the simplification from Eq.~\ref{eq:fullapprox} to Eq.~\ref{eq:lineint}, the forward displacement is approximated by the direct integration of forward speed. In reality, the combination of lateral and rotational velocities can lead to net translation in the forward direction. For example, car undergoing parallel parking will have zero instantaneous lateral velocity but can have finite lateral displacement with properly sequenced forward and rotational velocity. Such effect is known as Lie bracket effect~\cite{hatton2015nonconservativity} and is neglected in Eq.~\ref{eq:fullapprox}. The first order of Lie bracket effect can be compensated for by introducing a Lie bracket correction term~\cite{hatton2015nonconservativity}. Higher order Lie bracket effects can be minimized by properly choosing the body frame~\cite{linoptimizing}. With the Lie bracket correction term, we can better approximate the net forward displacement~\cite{hatton2015nonconservativity}:
\begin{align}\label{eq:hfintro}
    g(T)
    &= \iint_{\phi} \Big( \underbrace{-\textbf{d}\bA(\bw)+\boldsymbol{[A_1,\ A_2]}}_{D\bA(\bw)} \Big)\  \mathrm{d}w_1\mathrm{d}w_2 
\end{align}
The three rows of $D\bA(\bw)$ can thus produce three height functions in the forward, lateral, and rotational directions respectively. 

\begin{figure}[t]
\centering
\includegraphics[width=1\linewidth]{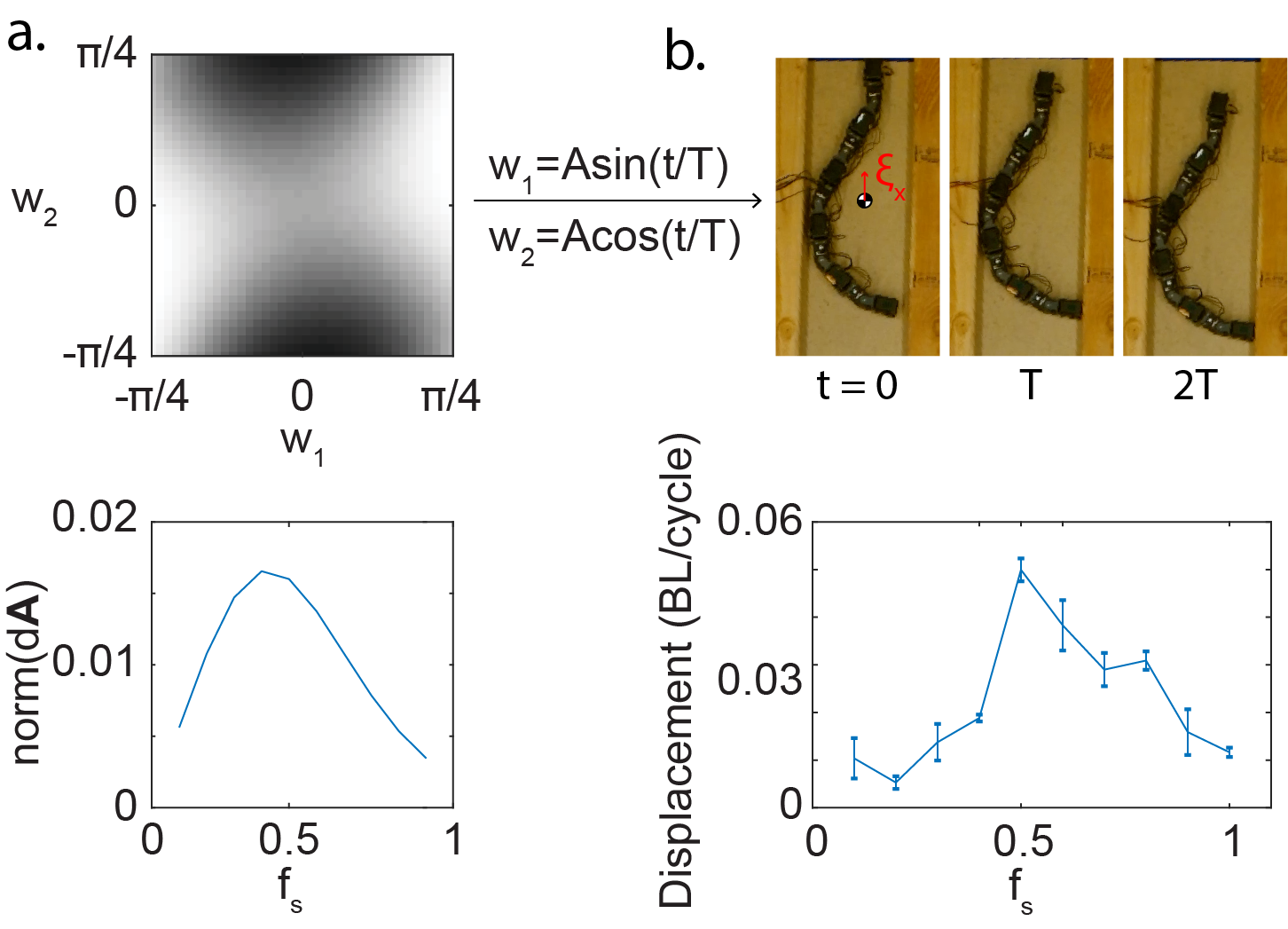}
\caption{\textbf{Lie bracket effect} (a) Theoretical prediction of forward velocity integral by calculating the Frobenius Norm of $\textbf{d}\bA(\bw)$ for a range of spatial frequencies ($f_s$). (b) Experimental verification. We used a pair of smooth parallel walls to restrict the robot's body velocity only in forward direction. We tested the locomotion performance for gaits with different spatial frequencies. Backwards locomotion is observed with its peak at $f_s=0.5$, which is consistent with our theoretical predictions.}
\label{fig:lie}
\end{figure}

\subsection{Lie bracket effect for OAL}

Limbless locomotors have limited mobility on hard ground ~\cite{liljeback2013lateral,chongmoving,alben2019efficient}. From a geometric perspective, we posit that some symmetry exists to limit the mobility of limbless locomotion on hard ground. The presence of obstacles can break such symmetry and therefore facilitate effective locomotion. To explore such symmetry breaking, in Fig.~\ref{fig:NumOfWaves}, we compared the height function for limbless robot with different shape basis functions ($f_s = \{ 1,\ 0.5\}$ in Eq.~\ref{eq:shapebasis}). In both cases, the height function $ D\bA(\bw)$ is almost constantly zero over the entire shape space, indicating that the robot has almost negligible speed regardless of the choice of gait (Fig.~\ref{fig:NumOfWaves}). When we look carefully at different components of the height function, we notice that robots with 1 wave ($f_s = 1$) and 0.5 waves ($f_s = 0.5$) exhibit distinct properties. On the one hand, for the robot with 1 wave, neither the forward velocity integral nor the Lie bracket effect can lead to significant translation (Fig. \ref{fig:NumOfWaves}.a). On the other hand, for the robot with 0.5 waves, the forward velocity integral and the Lie bracket effect have the same magnitude but opposite direction contribution to locomotion (Fig. \ref{fig:NumOfWaves}.b). This observation indicates lateral forces (likely from obstacles) can also contribute to forward velocities when the limbless locomotor is operating at appropriate spatial wave numbers $f_s$.

To determine the $f_s$ that can benefit the most from lateral forces, we calculated the Frobenius norm\footnote{We chose Frobenius norm to approximate the magnitude of the vector field} of $\textbf{d}\bA(\bw)$ for a range of wave numbers (Fig.~\ref{fig:lie}). From the geometric analysis, we predict that wave number $\approx$ 0.5 will have the highest possibility to benefit from lateral forces, and specialized in OAL.

\begin{figure}[t]
\centering
\includegraphics[width=1\linewidth]{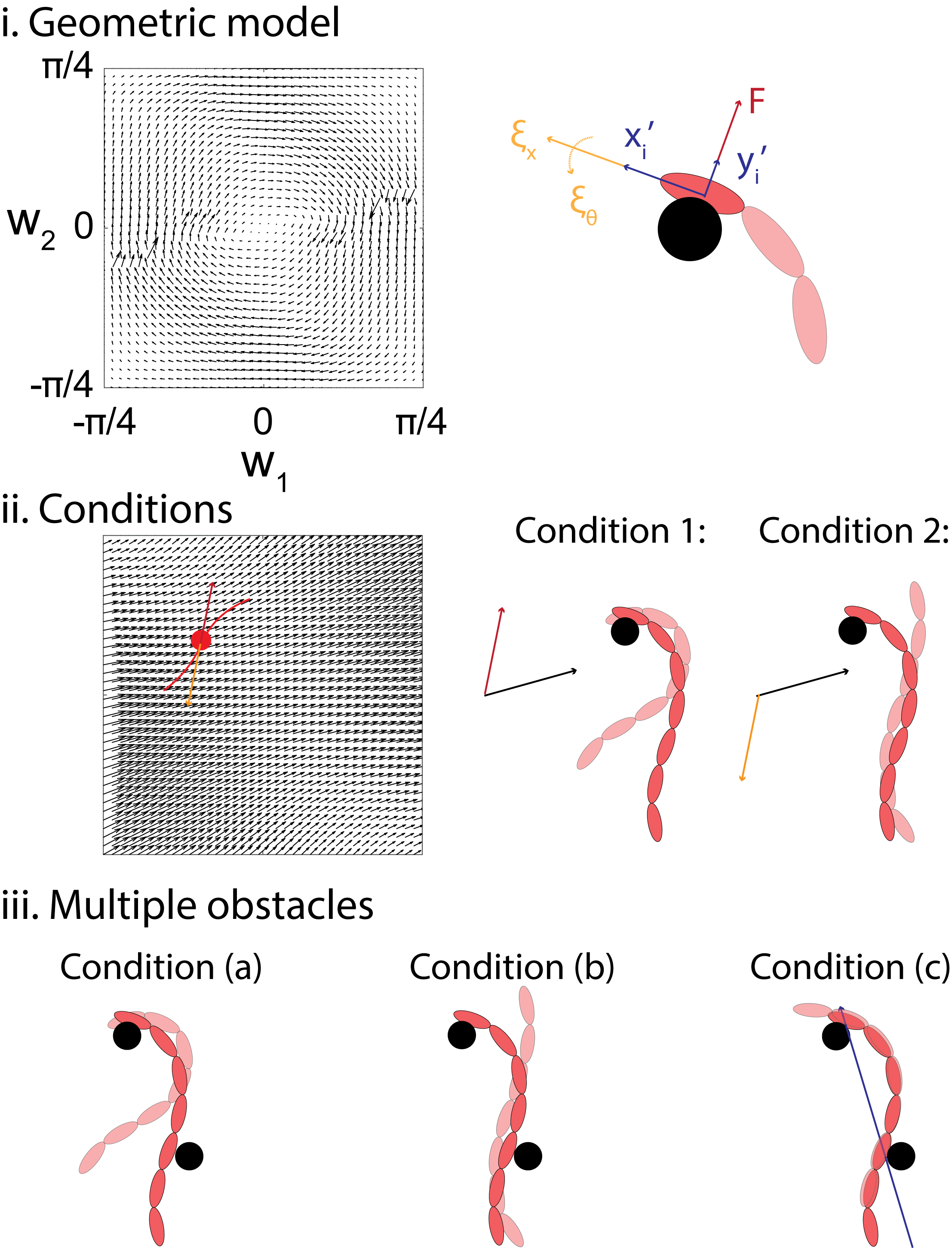}
\caption{\textbf{Modeling interactions between robot and obstacles} (i) (\textit{Left}) The vector field $V_1$ assuming the obstacle has interactions with the head link ($i_o=1$). (\textit{Right}) Force relationship illustrations for interactions between robot and obstacle. (ii) (\textit{Left}) The vector field $V_2$ assuming the obstacle has interactions with the head link ($i_o=1$). (\textit{Right}) The two conditions in Sec. \ref{sec:model}. (iii) OAL with multiple obstacles. Three conditions are compared. Note that in condition (c), obstacles constrain the lateral and rotational oscillation of robot's central body axis (blue arrow).}
\label{fig:VF12}
\end{figure}

\section{Modeling Interaction with One Obstacle}\label{sec:model}

\subsection{Geometric Model}
In the previous section, we introduced a derivation of the local connection vector field in homogeneous environments. In heterogeneous environments, the interactions with obstacles can often lead to changes in force and torque balance, and thus changes in the connection vector field. In this section, we establish a new method to numerically calculate the connection vector field, respecting the interactions between the robot and obstacles in its environment. Note that to simplify our analysis, we assume that the friction between the robot and the obstacle is negligible~\cite{liljeback2011experimental}.

Consider one obstacle in contact with the robot. Index $i_0$ denotes the link of contact. We further assume that $i_0$ does not change in each obstacle-interaction instance. This assumption is later justified in robot experiments.

For simplicity, our analysis below assumes that the obstacle resides on the left hand side (LHS) of link $i_0$. The analysis for the right hand side (RHS) obstacle will be symmetric to our analysis below. Existence of the obstacle will restrict the lateral body velocity~${\xi_y}\geq0$. In this way, there are two mutually exclusive conditions for the lateral body velocity:
\begin{figure}[t!]
\centering
\includegraphics[width=1\linewidth]{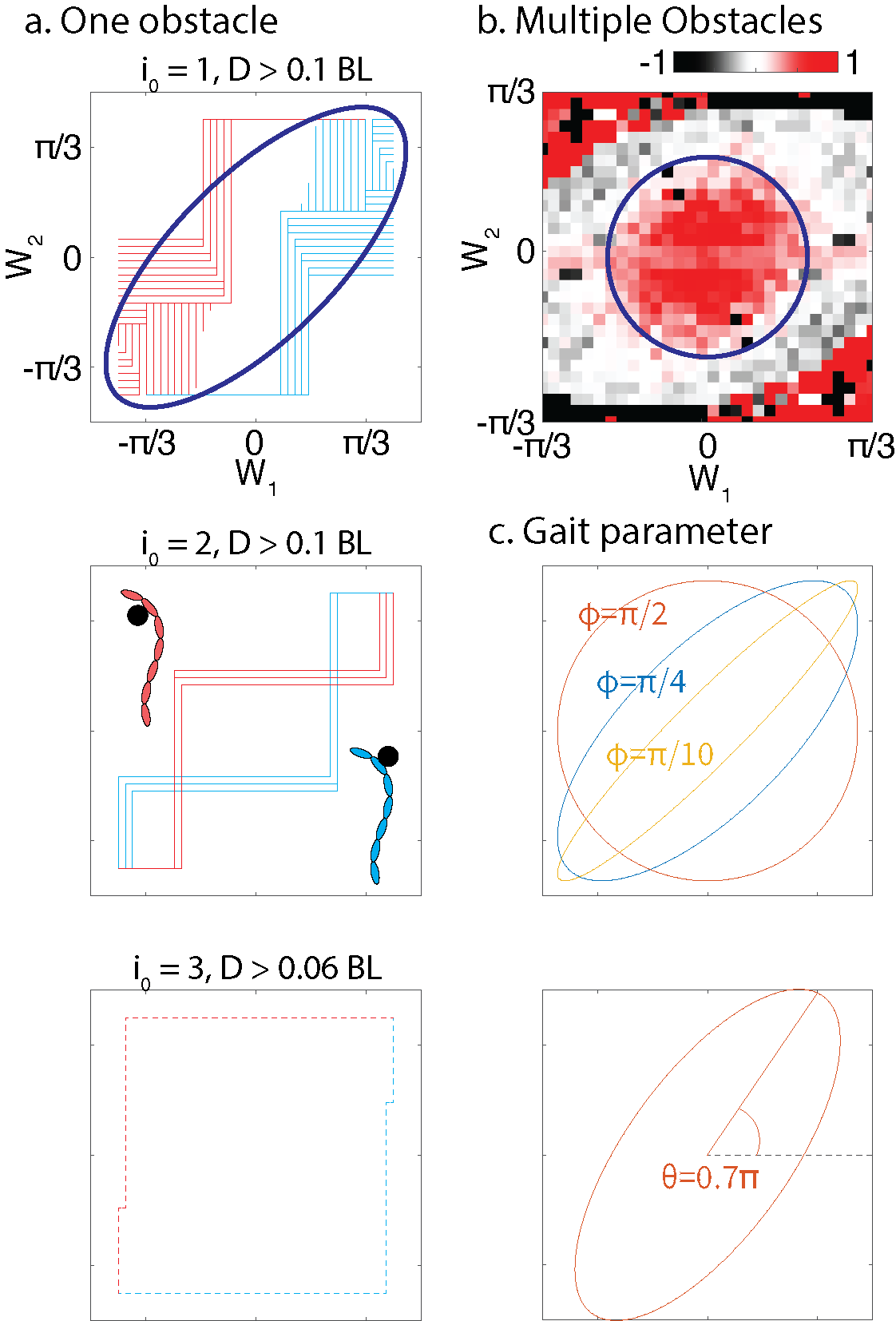}
\caption{\textbf{Identification of gait templates} (a) Collection of effective OAL gaits for (\textit{top}) $i_0=1$, (\textit{mid}) $i_0=2$, and (\textit{bottom}) $i_0=3$. We consider a gait to be effective if it can produce displacement greater than 0.1 BL (body length). Note that there is no effective gait for $i_0=3$. We illustrate the optimal gait with $D=0.05$ for $i_0=3$. (b) Height function for OAL among densely-distributed obstacles. (c) Parameter variation. (\textit{Top}) An illustration of ellipse eccentricity variation by manipulating $\phi$. (\textit{Bottom}) An illustration of ellipse orientation variation by manipulating $\theta$.}
\label{fig:GaitID}
\end{figure}

\subsubsection{\texorpdfstring{${\xi_y}=0$}{ξy=0}} In this case, the robot will remain in contact with the obstacle. If we assume that the friction between the robot and the obstacle is negligible, then the net force from obstacle to robot ($F$) will align with the lateral direction ($y'_i$) of the body frame in link $i_0$. In the body frame of link $i_0$, the interaction between the obstacle and the robot only contributes in the lateral direction. In other words, the force and torque balance in forward and rotational directions are independent from the interactions with obstacles. In this way, we can rewrite Eq. \ref{eq:forceIntegral} into:

\begin{equation}\label{eq:obstacle}
    \bF=\sum_{i} {\Big(\bF^{i}_{\parallel}(    \begin{pmatrix} 
        \xi_x \\
        0 \\
        \xi_\theta
    \end{pmatrix},\bw,\dot{\bw})+\bF^{i}_{\perp}(    \begin{pmatrix} 
        \xi_x \\
        0 \\
        \xi_\theta
    \end{pmatrix},\bw,\dot{\bw})\Big)}=\begin{pmatrix} 
        0 \\
        F \\
        0
    \end{pmatrix}.
\end{equation}

In Eq. \ref{eq:obstacle}, there are two variables and two equality constraints, allowing us to determine the local connection vector field.

\subsubsection{\texorpdfstring{${\xi_y}>0$}{ξy>0}} In this case, the robot will leave the obstacle. In this way, original force and torque balance in Eq.~\ref{eq:forceIntegral} are still valid to determine the local connection vector field.

\subsection{Inequality Constraints}

With the two mutually exclusive interactions conditions, it is thus important to establish a criterion to evaluate the direction of ${\xi_y}$. We first explore the conditions where the robot leaves the obstacle. Specifically, from the equation of motion (Eq.~\ref{eq:EquationOfMotion2}), the lateral velocity ${\xi_y}$ can be approximated by:
\begin{equation}
    \xi_y={\bA_y(\bw)\dot \bw}, \label{eq:conditon}
\end{equation}
\noindent where $\bA_y(\bw)$ is the second row of the local connection matrix $\bA(\bw)$. On the one hand, if ${\bA_y(\bw)\dot \bw}>0$, the robot will leave the obstacle, which is consistent with our assumed condition. In this case, Eq.~\ref{eq:conditon} is valid in accordance with Eq.~\ref{eq:forceIntegral}, where we use condition (2) to determine the local connection matrix. On the other hand, if ${\bA_y(\bw)\dot \bw}\leq0$, the robot will keep engaging with the obstacle, which contradicts our assumption. In this case, Eq.~\ref{eq:conditon} is not valid, and we will use Eq.~\ref{eq:obstacle} and condition (1) to determine local connection matrix.

\subsection{Gait Design}
With the above model, we can now design gaits for limbless robots in obstacle-rich environments. With the optimal gait, the robot should take the best advantage of each obstacle-interaction and leave the obstacle only when necessary. Consider the joint angle limit being $\theta_m$ ($w_1,w_2\in [-\theta_m, \theta_m]$. Let $\Phi= \{ \phi: [0, T]\rightarrow [-\theta, \theta]\times [-\theta, \theta] \}$ be the collection of all paths in the shape space; let $V_1$ be the local connection vector field generated from condition 1 (Eq. \ref{eq:obstacle}); and $V_2=\bA_y(\bw)$. The gait optimization problem becomes a line integral subject to direction constraints:

\begin{problem}\label{prob:opti}
	Find the path $\phi\in\Phi$,
	subject to: $\frac{d\phi(t)}{dt}\cdot V_2\Big(\phi(t)\Big)>0\ \forall \ t\in [0, T]$,
	such that $\int_0^T \frac{d\phi(t)}{dt}\cdot V_1\Big(\phi(t)\Big)\mathrm{d}t$ is maximized.
\end{problem}

Assuming $i_0 = 1$, we showed an example of $V_1$ and $V_2$ in Fig. \ref{fig:VF12}.

\subsection{Numerical Optimization}\label{sec:numOpti}

In practice, we discretize the shape space into a $(n+1)\times (n+1)$ lattice grid, where $n$ is a suitable positive integer. The values of $V_{1}$ and $V_{2}$ are then numerically calculated at the grid points: $V_{i}(x,y) = \bigg[V_{i,1}(x,y), V_{i,2}(x,y)\bigg]$ where $i=1,2$ and $(x,y)$ is a discretized element in the shape space. We optimize $\phi$ among lattice paths with horizontal and vertical line segments. $V_2$ is one part of the vector fields for locomotion in isotropic environment; thus it is reasonable to assume that $V_2$ is a conservative vector field~\cite{liljeback2013lateral,chongmoving,alben2019efficient}. Then we can compute a potential function $P(x,y)$ defined on the shape space such that $V_{2}$ is the gradient of $P(x,y)$.

We consider a weighted directed graph $G=(U,A)$, where the set of vertices $U$ consists of the $(n+1)\times (n+1)$ lattice points\footnote{We chose the letter $U$ (instead of $V$) to represent collections of vertices to avoid notation confusion with $V_{1,2}$ as in vector fields.}. In this way, at each vertex $u=(x,y)\in U$, there are 4 adjacent vertices: $\{(x\pm 1, y)$,\ $(x, y\pm 1)\}$. The arcs are constructed in the following way: 
\paragraph{} If $P(x+1,y) > P(x,y)$, then we add an arc from $(x,y)$ to $(x+1,y)$ with weight $V_{1,1}(x,y)$ to $A$;
\paragraph{} If $P(x-1,y) > P(x,y)$, then we add an arc from $(x,y)$ to $(x-1,y)$ with weight $V_{1,1}(x,y)$ to $A$;
\paragraph{} If $P(x,y+1) > P(x,y)$, then we add an arc from $(x,y)$ to $(x,y+1)$ with weight $V_{1,2}(x,y)$ to $A$;
\paragraph{} If $P(x,y-1) > P(x,y)$, then we add an arc from $(x,y)$ to $(x,y-1)$ with weight $V_{1,2}(x,y)$ to $A$;

Thus, the existence of an arc $a_{ij}\in A$ (from vertex $u_i$ to $u_j$, $u_{i},u_{j}\in U$) indicates that the move from $u_i$ to $u_j$ has positive dot product in $V_2$. The weight of $a_{ij}$ denotes the line integral from $u_i$ to $u_j$ along $V_1$.

\begin{lemma}\label{lem:basis}
    $G$ is a directed acyclic graph (DAG).
\end{lemma}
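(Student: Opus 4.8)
The plan is to show that every arc in $G$ strictly increases the value of the potential function $P$, which immediately rules out directed cycles. First I would observe that, by construction, an arc is added from $u_i=(x,y)$ to an adjacent vertex $u_j$ precisely when $P(u_j) > P(u_i)$: each of the four bullet rules in Section~\ref{sec:numOpti} has the hypothesis $P(\text{neighbor}) > P(\text{current})$ before an arc into that neighbor is created. Hence for every arc $a_{ij}\in A$ we have $P(u_j) > P(u_i)$.

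Next, suppose for contradiction that $G$ contains a directed cycle $u_{k_1} \to u_{k_2} \to \cdots \to u_{k_m} \to u_{k_1}$. Applying the inequality above along each consecutive arc gives the chain $P(u_{k_1}) < P(u_{k_2}) < \cdots < P(u_{k_m}) < P(u_{k_1})$, so $P(u_{k_1}) < P(u_{k_1})$, a contradiction. Therefore $G$ has no directed cycle, i.e.\ $G$ is a DAG. (One should note this argument needs $P$ to be well defined as a single-valued function on the lattice, which is exactly why the paper assumes $V_2$ is conservative and introduces the potential $P$ with $V_2 = \nabla P$; I would state this dependence explicitly.)

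I do not expect a serious obstacle here — the result is essentially the standard fact that a graph admitting a real-valued vertex labeling that strictly increases along every arc is acyclic (the labeling is a topological-order certificate). The only point requiring a sentence of care is the legitimacy of the potential function $P$ itself: this rests on the physical assumption that the lateral-direction connection component $V_2=\bA_y(\bw)$ is conservative on the (simply connected) square shape domain $[-\theta_m,\theta_m]^2$, so that a path-independent potential exists. Granting that, the proof is a two-line telescoping argument, and no discretization subtleties arise because the arcs are only ever between lattice-adjacent points and the comparison $P(u_j) > P(u_i)$ is exact at those grid points.

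Finally, I would remark on why this lemma matters downstream: since $G$ is a DAG, the constrained optimization in Problem~\ref{prob:opti} (maximize the $V_1$-line-integral subject to moving monotonically along $V_2$) reduces to a longest-path computation on $G$, which is solvable in linear time on a DAG — this is presumably the use the authors make of it in the next step.
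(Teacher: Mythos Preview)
Your proof is correct and follows essentially the same idea as the paper's: both exploit that the arcs are monotone with respect to the potential $P$ induced by the conservative field $V_2$. The paper phrases the contradiction in terms of the line integral of $V_2$ around a putative cycle being strictly positive (impossible for a conservative field), whereas you argue directly that $P$ would strictly increase around the cycle---a cleaner formulation of the same argument.
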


\begin{proof}[Proof of Lemma \ref{lem:basis}]
     Let $C$ be a directed cycle in $G$. From our previous assumptions, every arc in $C$ has positive dot product in $V_2$. Thus, the sum of all dot product of arcs in $C$ and $V_2$ must be strictly positive. This indicates that there exists a path in a conservative vector field ($V_2$) with positive strictly line integral, which violates our assumption. Therefore, there is no directed cycle in $G$.
\end{proof}

With the aforementioned notation, a discretized version of Problem \ref{prob:opti} becomes

\begin{problem}\label{prob:disc}
	Find a simple directed path in $G=(U,A)$ with maximal weight.
\end{problem}

It is well-known that Problem \ref{prob:disc} in a DAG has a linear-time algorithm if the starting point is fixed~\cite[p. 661]{sedgewick2011algorithms}. So we can run this algorithm once for each vertex in $U$ to solve Problem~\ref{prob:disc}. Since $|U|=(n+1)^{2}$, our algorithm has time complexity $O(n^{4})$.

We implemented this algorithm in MATLAB and found optimal paths in our lattice grid.

\subsection{Gait Identification}

From the algorithms introduced in Sec. \ref{sec:numOpti}, we solve Problem \ref{prob:opti} and identify the effective gait paths $\phi_{LHS}$ with link of contact varying from 1 (head) to 3 (mid-body) in Fig.~\ref{fig:GaitID}a. We define a gait path to be effective if it can cause net displacement greater than 0.1 body length (BL). Note that $\phi_{LHS}$ (colored red) denotes gait paths designated for robot interacting with an obstacle on the left-hand-side. From symmetry, we can identify $\phi_{RHS}$ with an obstacle on the right-hand-side (colored blue). Note that no gait path can lead to displacement higher than 0.1 BL when interacting with obstacles on mid-body links ($i_0 = 3$). In Fig.~\ref{fig:GaitID}a (bottom), we illustrate the optimal gait path which causes displacement of 0.06 BL.

From Fig.~\ref{fig:GaitID}a, we notice that the number of effective gait paths decreases as the link of contact transitions from head to mid-body links. Further, the properly designed gait path can cause up to 0.35 BL (per cycle) when interacting with the head link; whereas it can only cause 0.12 and 0.06 BL when $i_0$ changes to 2 and 3 respectively. Therefore, our results indicate that it is desired to interact with obstacles from head link rather than mid-body links.

We further observe from Fig.~\ref{fig:GaitID}a (top) that almost all effective gait paths emerge to be (at least a part of) elliptical paths. To quantify this observation, we fit the collection of effective gaits with an oriented ellipse. An ellipse with flatness (defined as the ratio of short-axis and long-axis) around $0.5$ can reasonably fit effective gait paths. The ellipse oriented at angle of $\pi/4$ with respect to the horizontal axis. 


\section{Modeling Interaction with Multiple Obstacles}\label{sec:multiObs}

Now we consider multiple obstacles in contact with the robot. Similar to our analysis before, there are three conditions with respect to the status of robot leaving/engaging obstacles:

\paragraph{Robot only interacts with one obstacle} In this case, the robot will only remain contact with one of the obstacle. This condition is similar to condition (1) in Sec.~\ref{sec:model}.

\paragraph{Robot leaves all obstacles} In this case, the robot will leave all obstacles, which is similar to condition (2) in Sec.~\ref{sec:model}. 

\paragraph{Robot interacts with multiple obstacles} In this case, the robot will remain contact with more multiple obstacles. As illustrated in Fig.~\ref{fig:VF12}c, the presence of multiple obstacles restricts the lateral oscillation and rotational oscillation of the central body axis on robot (assuming the friction is negligible~\cite{liljeback2011experimental}). The definition of central body axis frame can be found in ~\cite{gong2016simplifying,rieser2019geometric}. In other words, in the body reference frame of central body axis, we have:

\begin{equation}\label{eq:multiobstacle}
    \bF=\sum_{i} {\Big(\bF^{i}_{\parallel}(    \begin{pmatrix} 
        \xi_x \\
        0 \\
        0 \\
    \end{pmatrix},\bw,\dot{\bw})+\bF^{i}_{\perp}(    \begin{pmatrix} 
        \xi_x \\
        0 \\
        0 \\
    \end{pmatrix},\bw,\dot{\bw})\Big)}=\begin{pmatrix} 
        0 \\
        F_y \\
        F_\tau \\
    \end{pmatrix}.
\end{equation}

In Eq. \ref{eq:multiobstacle}, there is only one variable and one equality constraint, allowing us to determine the local connection vector field. 

Note that the condition determination for when the robot is in contact with multiple obstacles can be challenging, which likely requires sensing and compliance as indicated in prior work~\cite{travers2018shape,lillywhite2014snakes}. However, consider the case where the obstacles are so densely distributed that the robot will inevitably interact with multiple obstacles. In this case, we can simply assume that condition (c) is always valid and calculate the height functions to determine the optimal gaits. We illustrate the height function in Fig.~\ref{fig:GaitID}.b. We notice that a traveling-wave gait path emerges as an optimal gait in environments with densely-packed obstacles.

\begin{figure}[t]
\centering
\includegraphics[width=1\linewidth]{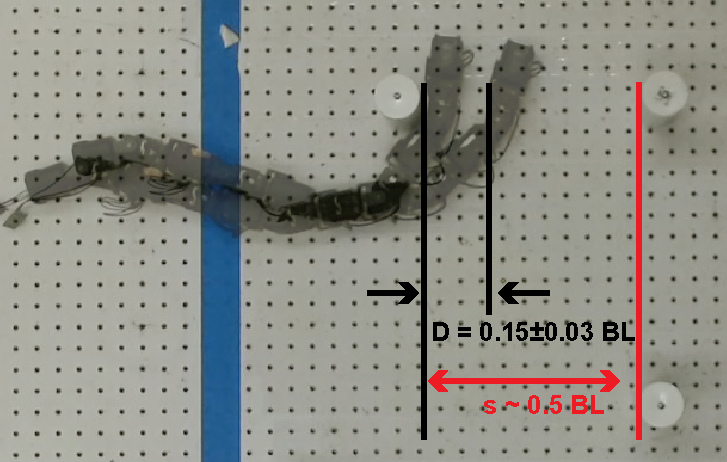}
\caption{\textbf{Minimal obstacle spacing for effective OAL} Robot effective OAL gaits through interaction with an obstacle. However the translation ($D=0.15\pm0.03 BL$) is not sufficient to reach the next obstacle (spacing $s\sim 0.5 BL$). Therefore, the robot will stuck at the gap between two obstacles.
}
\label{fig:spacing}
\end{figure}

\section{Robophysical model}

In robophysical experiments, we used a limbless robot composed of 11 identical alternative pitch-yaw arranged rotary joints using Dynamixel AX-12a motors. 
The gaits are executed by controlling the positions of joints to follow a sequence of joint angle commands. Note that for 2D in-plane motion, we only command odd (yaw) joints to move while the even (pitch) joint angles are held at zero. For each gait tested, we repeat the experiment at least six times. 
In each trial, we commanded the robot to execute three cycles of the gait.
The motion of the robot is tracked by an OptiTrack motion capture system at a 120 FPS frequency with eight reflective markers affixed along the midline of the robot.

\section{Results}

\subsection{Shape basis function optimization for OAL}

To verify our prediction on shape basis function for OAL, we conducted robophysical experiments using parallel walls. As shown in Fig. \ref{fig:lie}, the robot was confined between two parallel smooth walls with spacing 0.3 body length of the limbless robot. The interactions between the robot and the wall then restricted the velocity of the robot to the forward direction. Thus the average speed of the robot in parallel walls closely resembles the forward velocity integral from our geometric mechanics analysis. Interestingly, the robot has negative forward displacement, in agreement with the predictions from the forward velocity integral. Further, we noticed that highest backward speed occurs at $f_s=0.5$, also in agreement with our theoretical predictions. Therefore, robophysical experiments verified our prediction that lateral forces can also contribute to forward velocities via Lie bracket effects. In most cases, the interaction between obstacles and the robot is predominantly in lateral directions~\cite{liljeback2011experimental}. We thus chose $f_s=0.5$ in our later analysis.

\subsection{Minimal obstacle spacing for effective OAL}

We investigate the minimal obstacle spacing for effective OAL. From our analysis in Sec.~\ref{sec:model}, we predicted that with proper coordination, the interaction between a robot and a (single) obstacle can cause displacement up to 0.35 BL. Note that the number 0.35 is computed based on the robot morphology and our choice of shape basis function. In other words, there exists a upper bound on how much a single obstacle can contribute to robot OAL. If the obstacle spacing is greater than such a bound, the robot will be likely unable to reach the next surrounding obstacle.

To verify our prediction, we tested robot OAL performance in obstacle-rich environments with spacing greater than 0.35 BL. We notice that while OAL gaits can cause some finite displacement through the interaction with the first obstacle, such translation is not sufficient to reach the next obstacle (Fig.~\ref{fig:spacing} and SI video). In this way, the robot will get ``stuck" in the gap between two obstacles. 

\subsection{OAL with sparsely distributed obstacles}

\begin{figure}[t]
\centering
\includegraphics[width=1\linewidth]{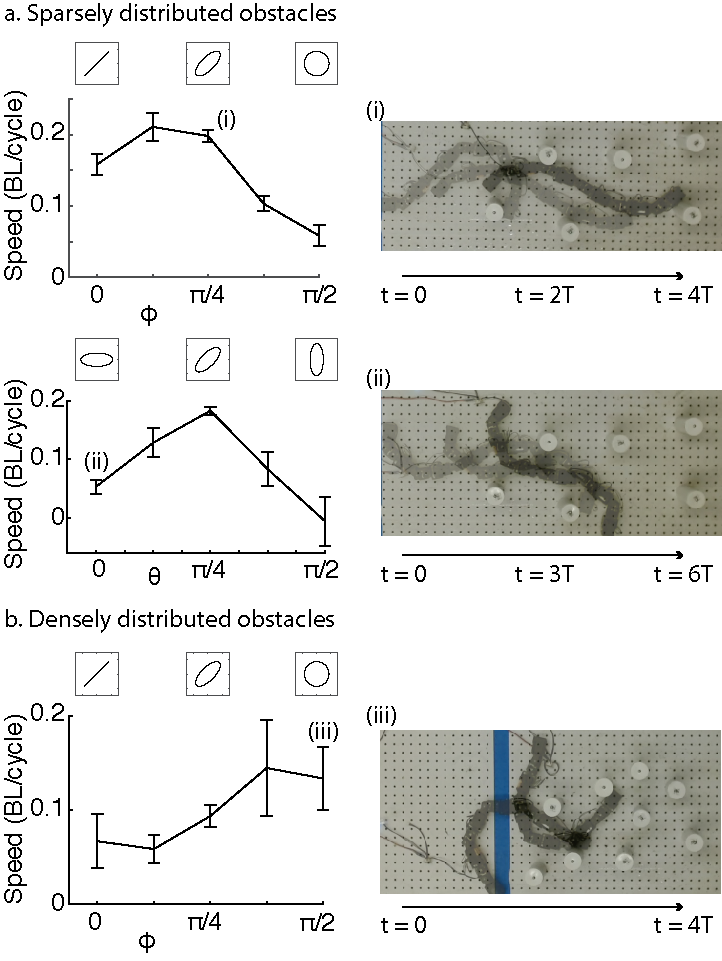}
\caption{\textbf{Robophysical OAL experiments} (a) Sparsely distributed obstacles. (\textit{a.Top}) OAL performance as a function of $\phi$ (for fixed $\theta=\pi/4$). Elliptical gaits ($\phi\sim\pi/4$) leads to the best OAL performance. (i) Snapshots of robot execute elliptical gaits ($\phi=\pi/4$) among sparsely distributed obstacles. (\textit{a.Bottom}) OAL performance as a function of $\theta$ (for fixed $\phi=\pi/4$). Elliptical orientation ($\theta=\pi/4$) lead to the best OAL performance. (ii) Snapshots of robot execute uncoordinated elliptical gaits ($\theta=0$) among sparsely distributed obstacles. (b) Densely distributed obstacles. OAL performance as a function of $\phi$. Circular gaits ($\phi=\pi/2$) leads to the best OAL performance. (iii) Snapshots of robot execute traveling-wave gaits ($\phi=\pi/2$) among densely distributed obstacles.
}
\label{fig:results}
\end{figure}

From our framework, we predicted that elliptical gaits can have the best performance among sparsely distributed obstacles. To test our prediction, we constructed a sparsely-distributed obstacle-rich environments. The obstacles are randomly positioned in the track (Fig.~\ref{fig:results}). We then conduct robophysical experiments and evaluate the OAL performance of various gaits.

\subsubsection{Varying ellipse eccentricity}

We first test gaits with varying eccentricity. Specifically, prescribe the reduced shape variable by $w_1(t) = w_m\sin{(\omega t)},\ w_2(t) = w_m \cos{(\omega t + \phi)}$, where $\omega$ is the temporal frequency, $w_m$ is the amplitude, and $\phi$ controls the eccentricity. As illustrated in Fig.~\ref{fig:GaitID}.c, varying $\phi$ can facilitate the transition from standing wave ($\phi= 0$) to traveling wave ($\phi= \pi/2$) in the shape space. In our theoretical analysis (Sec.~\ref{sec:model}), we predict that $\phi=\pi/4$ can have the best OAL performance. We test gaits with different $\phi$ among sparsely-distributed obstacles. We notice that $\phi=\pi/4$ indeed outperforms other gaits, including standing wave and traveling wave (Fig.~\ref{fig:results}). 

To explore the principle behind the advantage of the elliptical gaits, we measured the duration of obstacle-contact in these experiments. Here, we defined the duration of contact by the average fraction that the robot is interacting with obstacles $\tau/T$, where $\tau$ is empirically measured average contact duration (Fig. \ref{fig:multiplePosts}) and $T$ is the gait period. We notice the contact duration in the standing-wave gait is significantly lower than the elliptical-wave and traveling-wave gaits, indicating that the standing-wave gait has the lowest duration of beneficial contact between robot and obstacle. We also measured the attack angle between the robot and the obstacle. It is defined as the angle between the head link and the obstacle at the end of the robot-obstacle interaction. As posited by \cite{liljeback2011experimental}, larger attack angle indicates greater push from the obstacle to robots. As shown in Fig. \ref{fig:results}, the attack angles in the traveling-wave gait are significantly lower than the elliptical and standing wave gaits, indicating that the traveling wave gait can take the least advantage of the obstacle.

\begin{figure}[t]
\centering
\includegraphics[width=1\linewidth]{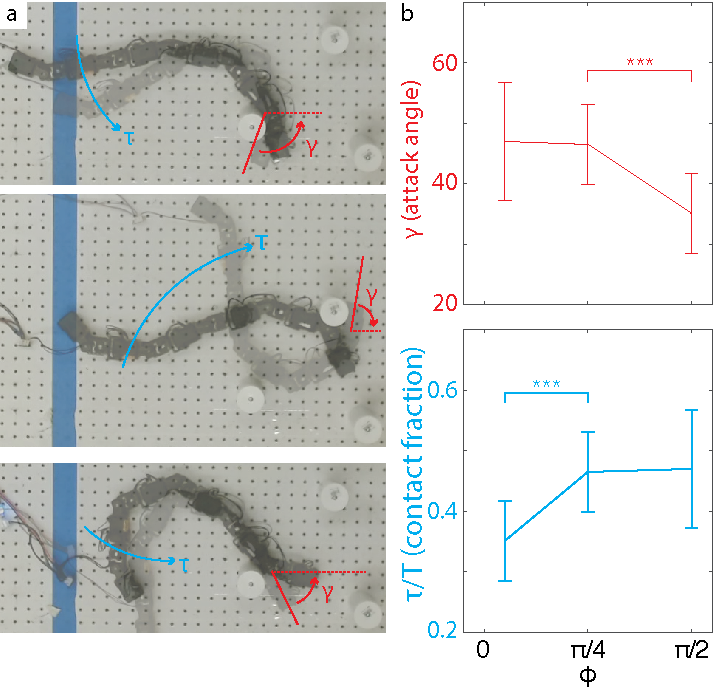}
\caption{\textbf{Advantage of elliptical gaits} (a) Snapshots of robots executing (\textit{top}) standing wave, (\textit{mid}) elliptical wave, and (\textit{bottom}) traveling wave locomotion among sparsely-distributed obstacles. Attack angle and contact duration are labelled. (b) (\textit{top}) Attack angle as a function of $\phi$. Traveling wave ($\phi = \pi/2$) have significantly lower attack angle than standing wave ($\phi = 0$) and elliptical wave ($\phi = \pi/4$). (\textit{Bottom}) Contact fraction as a function of $\phi$. Standing wave have significantly lower attack angle than traveling wave and elliptical wave. 
}
\label{fig:multiplePosts}
\end{figure}

\subsubsection{Varying ellipse orientation}

We further explore the optimal ellipse eccentricity. Consider an elliptical gait with $\phi=\pi/4$. We define ($\theta$) as the angle between the long axis and the horizontal axis. We illustrate an example gait with $\theta = \{0.45\pi,\ 0.7\pi\}$ in Fig.~\ref{fig:GaitID}c. From our theoretical analysis (Sec.~\ref{sec:model}), we predict that $\theta= \pi/4$ can cause the optimal OAL performance. Robophysical experiments verified our prediction that $\theta=\pi/2$ causes the best OAL performance. 

\subsection{OAL with densely distributed obstacles}

\begin{figure}[t]
\centering
\includegraphics[width=1\linewidth]{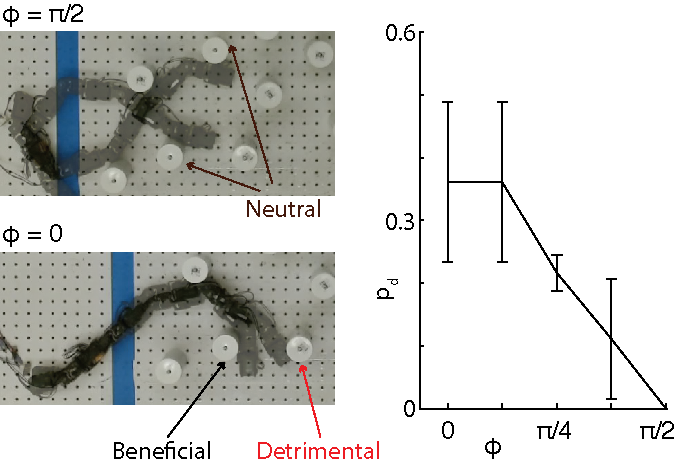}
\caption{\textbf{Beneficial obstacles.} (\textit{Left}) Snapshots of traveling wave  (\textit{top}) and standing wave (\textit{bottom}) locomotion among densely-distributed obstacles. Beneficial, detrimental, and neutral obstacles are labeled. (\textit{Right}) $p_d$, the probability of encountering detrimental obstacles, plotted as a function of $\phi$. $p_d$ decreases as $\phi$ increases.
}
\label{fig:stuck}
\end{figure}

We next explore OAL among densely distributed obstacles. We constructed a densely-distributed obstacle-rich environments where robot will inevitably encounter with multiple obstacles. We tested gaits with varying $\phi$ on densely-distributed obstacles and observed that traveling-wave gaits ($\phi = \pi/2$) can cause the best OAL performance (Fig.~\ref{fig:results}). We acknowledge that to effectively navigate in environments with many obstacles, sensing and/or compliance is typically required~\cite{wang2020directional,travers2018shape}. Since our analysis in Sec.~\ref{sec:multiObs} is limited to open-loop gait-level design, the large variation in our experiments is expected. 

To explore the physical principles behind the advantage of traveling-wave gaits, we examine the interaction profile between the robot and the obstacles. As predicted in our theoretical analysis (Sec.~\ref{sec:multiObs}), effective OAL in traveling-wave gaits results from the combined effects of multiple obstacles restricting the lateral/oscillation of central body axis. Therefore, there is no clear definition of ``beneficial" or "detrimental" obstacles in traveling-wave gaits. As illustrated in Fig.~\ref{fig:stuck}, interactions between the robot and obstacles are mostly perpendicular to the direction of motion (therefore considered as ``neutral") in traveling-wave gaits. On the other hand, effective OAL for elliptical and standing wave relies more on the interaction with a single obstacle. Therefore, OAL performance of elliptical and standing wave are sensitive to the distribution of obstacles. 

Following this idea, we record the probability of robot interacting of ``detrimental" obstacle ($p_d$) for traveling, elliptical, and standing gait templates. We notice that $p_d$ increases as $\phi$ increases (Fig.~\ref{fig:stuck}). Moreover, once interacting with the detrimental obstacles, the probability of escaping decreases as $\phi$ decreases (Fig.~\ref{fig:stuck}). 

\section{Discussion and Conclusion}

In this paper, we expanded the scope of geometric mechanics to heterogeneous environments. Specifically, we established a novel model that maps the presence of an obstacle in position space to constraints in shape space. In doing so, we illustrate that (1) there exists a threshold obstacle spacing below which OAL is likely not effective; (2) lateral forces from obstacles can also contribute to forward displacement via the Lie bracket effect; (3) elliptical-wave gaits ($\phi = \pi/4$, $\theta = \pi/4$) are specialized for locomotion among sparsely-distributed obstacles; and (4) traveling-wave gaits ($\phi = \pi/2$) are specialized for locomotion among densely-distributed obstacles. Our predictions are verified in robophysical experiments.

This paper focused on the open-loop gait-level design for OAL. We acknowledge that gait-level design is in general not sufficient for effective OAL, especially among densely-distributed obstacles. However, we believe that proper gait design can simplify necessary controls (e.g., passive body dynamics~\citet{wang2023lattice} and sensor-based feedback controls) for OAL. In concurrent work, it is illustrated that with the help of passive body dynamics, our framework helps facilitate effective OAL in various obstacle-rich environments. 

Apart from forward locomotion, our framework can also apply to studying turning behavior in obstacle-rich environments. For example, prior work indicates that the omega turn~\cite{wang2020reconstruction} emerges to be a robust turning gait, especially in obstacle-rich environments; \citet{wang2022omega} show that the presence of obstacles can even aid turning behaviors in limbless robots. We suspect that omega turn gaits can benefit from interaction with obstacles because of the kinematic properties of their gait trajectories. In future work, we aim to use our OAL framework to analyze the turning behaviors in limbless robots. In doing so, our framework paves the way toward machines that can traverse complex environments and facilitates understanding of biological locomotion.

\newpage

\bibliographystyle{plainnat}
\bibliography{references}

\end{document}